\def\BibTeX{{\rm B\kern-.05em{\sc i\kern-.025em b}\kern-.08em
    T\kern-.1667em\lower.7ex\hbox{E}\kern-.125emX}}
\newtheorem{theorem}{Theorem}
\newtheorem{lemma}{Lemma}
\begin{document}

\title{Update Estimation and Scheduling for Over-the-Air Federated Learning with Energy Harvesting Devices\\
\thanks{Furkan Bagci's work is supported by Türk Telekomünikasyon A.S. within the framework of the 5G and Beyond Joint Graduate Support Programme coordinated by the Information and Communication Technologies Authority. \\
\indent The authors acknowledge support from TUBITAK through CHIST-ERA
project SONATA (CHIST-ERA-20-SICT-004, funded by TUBITAK, Turkey
Grant 221N366). \\
\indent The work of Mohammad Kazemi was supported by UKRI under the U.K. government’s Horizon Europe Funding Guarantee under Grant 101103430.
}
}

\author{\IEEEauthorblockN{Furkan Bagci\textsuperscript{1}, Busra Tegin\textsuperscript{1}, Mohammad Kazemi\textsuperscript{2} and Tolga M. Duman\textsuperscript{1}}
\IEEEauthorblockA{\textsuperscript{1}\textit{Dept. of Electrical and Electronics Engineering, Bilkent University}, Ankara, Turkey \\
\textsuperscript{2}\textit{Dept. of Electrical and Electronic Engineering, Imperial College London}, London, UK \\
\{bagci, btegin, duman\}@ee.bilkent.edu.tr, mohammad.kazemi@imperial.ac.uk}

}

\maketitle

\begin{abstract}

We study over-the-air (OTA) federated learning (FL) for energy harvesting devices with heterogeneous data distribution over wireless fading multiple access channel (MAC). To address the impact of low energy arrivals and data heterogeneity on global learning, we propose user scheduling strategies. Specifically, we develop two approaches: 1) entropy-based scheduling for known data distributions and 2) least-squares-based user representation estimation for scheduling with unknown data distributions at the parameter server. Both methods aim to select diverse users, mitigating bias and enhancing convergence. Numerical and analytical results demonstrate improved learning performance by reducing redundancy and conserving energy.

\end{abstract}

\begin{IEEEkeywords}
Federated learning, over-the-air, fading MAC, energy harvesting, device scheduling, least-squares estimation.
\end{IEEEkeywords}

\section{Introduction}
With the growing number of devices collecting high-quality data, vast amounts of practical information are available for machine learning (ML) applications. Traditional ML approaches require centralized data sharing, demanding significant resources and raising privacy concerns. To address these issues, \textit{Federated Learning} (FL) enables collaborative model training without sharing local data, ensuring privacy, low latency, and improved learning quality \cite{mcmahan2017commef,nguyen2021FLIot}. However, FL relies on iterative transmission of local updates from mobile users (MUs) to the parameter server (PS), making communication bandwidth a key bottleneck \cite{yang2020ota}. Over-the-air (OTA) computation addresses this issue by leveraging wireless multiple access channel (MAC) to aggregate updates directly during transmission \cite{yang2020ota, amiri2020dsgd}. Multiple receive antennas with combining techniques in OTA FL can mitigate channel effects and alleviate fading, enabling convergence even with blind transmitters lacking channel state information (CSI) \cite{amiri2020BFL}.

The deployment of FL faces many challenges, such as hardware impairments, data distribution issues, and limited energy for MUs. To address energy constraints, energy harvesting (EH) devices are widely used to obtain energy from the environment, with studies exploring the channel capacity with various battery sizes \cite{sennur2015EH}. 
For FL with OTA EH devices, studies have examined joint user selection and receive beamforming \cite{chen2023joint}, weighted averaging to prevent bias \cite{ozan2022ehfl}, transceiver optimization \cite{an2024online}, and Markov decision processes for joint scheduling and power management \cite{zhang2024mdp}.

Studies on FL with non-independent and identically distributed (non-i.i.d.) data show that heterogeneous local datasets significantly impact model accuracy and convergence \cite{zhao2018noniid, li2019convergence}. In \cite{wang2020rl}, a deep reinforcement learning-based user selection framework is utilized to mitigate bias by exploiting the relationships between user updates and data distributions. In \cite{fraboni2021clusteredsampling}, clustered sampling is used to schedule users based on cosine similarity, improving representation. In \cite{balakrishnan2022diverse}, federated averaging with diverse user selection is employed to approximate full participation gradients, reducing redundancy from similar data. These studies assume user updates reflect data distributions, relying on separate transmission of user updates. In contrast, our OTA FL setup uses noisy aggregated updates transmitted simultaneously by all scheduled users, eliminating the need for separate-transmission resource allocation. 

In this work, we explore diverse user selection for FL with energy harvesting devices and heterogeneous data distribution by scheduling active users based on their data distribution using over-the-air transmission. Firstly, we select a subset of users that achieves a uniform data distribution using exact data distribution information to ensure the aggregate of selected users' updates approximates that of all users. Next, since knowing data distribution raises privacy concerns, we estimate user updates from aggregated signals on the PS via least-squares estimation (LSE), enabling user selection without revealing data distribution information. Unlike existing works, we use OTA aggregated signals instead of individual user updates for transmission efficiency along with participation information to estimate local model update representations. These representative updates are utilized to group users into clusters, improving learning performance and identifying redundant information in the system.

The paper is organized as follows. Section II introduces the FL setup and EH devices. Section III analyzes OTA FL for EH devices with a unit-sized battery and provides convergence analysis. In Section IV, we propose update estimation and user scheduling policies, with numerical results presented in Section V. We conclude the paper in Section VI.

\textit{Notations:} For vectors $\mathbf{x}$ and $\mathbf{y}$ of the same dimension, $\mathbf{x} \circ \mathbf{y}$ denotes element-wise product. We define $[i] = \{1, \dots, i\}$.

\section{System Model}

We consider an FL system with heterogeneous data distribution for $M$ energy harvesting devices. We assume that the mobile users do not have CSI and transmit their updates to the PS through a fading MAC using over-the-air transmission. The PS employs $K$ receive antennas to align the received signals in the absence of CSI at the transmitters (CSIT) using aggregated channel information.

In FL, the primary objective is to minimize a global loss function, denoted as $F(\boldsymbol{\theta })$, collaboratively across $M$ devices, where $\boldsymbol{\theta } \in \mathbb{R}^{2N}$ represents the parameters of the global model to be optimized. The global loss function is defined as
\begin{equation}\label{gl_loss}
F(\boldsymbol{\theta }) = \frac{1}{B} \sum_{m=1}^{M} \frac{\left|\mathcal{B}_{m}\right|}{B} F_{m}(\boldsymbol{\theta}),
\end{equation}
where $F_{m}(\boldsymbol{\theta})$ represents the average empirical local loss for the $m$-th user with model parameters $\boldsymbol{\theta}$. For $m$-th user with local dataset $\mathcal{B}_{m}$, for $m \in [M] $, and $ B \triangleq \sum_{m=1}^{M} \left| \mathcal{B}_{m} \right| $, we have
\begin{equation} \label{lc_loss}
    F_{m} \left ({\boldsymbol {\theta } }\right) = \frac {1}{\left| \mathcal{B}_{m} \right|} \sum \limits _{\boldsymbol {u} \in \mathcal {B}_{m}} f \left ({\boldsymbol {\theta }, \boldsymbol {u} }\right),
\end{equation}
with $f(\boldsymbol{\theta }, u)$ as the empirical loss function corresponding to the $u$-th data sample in the local dataset $B_m$.

In FL with EH devices, unlike traditional FL, the limited energy availability can cause some users to lack sufficient energy to perform local computations and transmissions. Consequently, contributions will only come from the MUs that have enough energy and are selected based on the scheduling policy. At each global iteration $t$, the PS broadcasts the latest global model, $\boldsymbol{\theta }(t)$. In response, the selected MUs perform $\tau$ iterations of local stochastic gradient descent (SGD) to minimize their individual local loss functions, $F_m(\boldsymbol{\theta })$, for $m \in \mathcal{S}(t)$, where $\mathcal{S}(t)$ is the set of scheduled users. Subsequently, the model updates obtained by the mobile users are transmitted back to the PS to contribute to the global learning process.

To compute the local model updates, the $m$-th user at the $i$-th local and $t$-th global iteration performs the local iterations of SGD with the following update rule: 
\begin{equation} \label{lc_update_rule}
    \boldsymbol{\theta}_{m}^{i+1}(t)=\boldsymbol{\theta}_{m}^{i}(t)-\eta_{m}^{i}(t)\nabla F_{m}(\boldsymbol{\theta}_{m}^{i}(t), \xi_{m}^{i}(t)),
\end{equation}
where $i \in [\tau]$, $\eta_{m}^{i}(t)$ is the learning rate and $\nabla F_{m}\left(\boldsymbol{\theta}_{m}^{i}(t), \xi_{m}^{i}(t)\right)$ represents the stochastic gradient estimate for the $\boldsymbol{\theta}_{m}^{i}(t)$ and the local mini-batch sample $\xi_{m}^{i}$ randomly chosen from the local dataset $\mathcal{B}_{m}$. 

After completing the local SGD steps, the $m$-th user computes the model update, which is aimed to be shared with the PS as 
\begin{equation} \label{lc_update}
    \Delta\boldsymbol{\theta}_{m}(t)=\boldsymbol{\theta}_{m}^{\tau}(t)-\boldsymbol{\theta}_{m}^{1}(t).
\end{equation}

Using over-the-air transmission over a fading MAC, the received signal at the $k$-th antenna of the PS at iteration $t$ is given as
\begin{equation} \label{rec_signal_0}
    \boldsymbol{y}_{PS,k}(t)=\sum_{m\in \mathcal{S}(t)}\boldsymbol{h}_{m,k}(t)\circ \boldsymbol{x}_{m}(t)+\boldsymbol{z}_{PS,k}(t),  
\end{equation}
where $\boldsymbol{x}_{m}(t)$ is the signal transmitted by the $m$-th user, and $\boldsymbol{h}_{m,k}(t)$ is the independent and identically distributed (i.i.d.) channel gains from the $m$-th user to the $k$-th antenna with entries ${h}_{m,k}^{n}(t) \sim \mathcal{C N}(0, \sigma_{h}^{2})$.  Similarly, ${z}_{PS,k}^{n}(t)$ denotes the $n$-th entry of the channel noise, $\boldsymbol{z}_{PS,k}(t)$, which is i.i.d. circularly symmetric additive white Gaussian noise (AWGN) and is distributed according to $\mathcal{C N}(0, \sigma_{z}^{2})$.

The PS uses the received signals from $K$ antennas to update the global model as
\begin{equation} \label{gl_update_part}
    \boldsymbol{\theta}_{PS}(t+1)=\boldsymbol{\theta}_{PS}(t)+ \Delta \hat{\boldsymbol{\theta}}_{PS}(t),
\end{equation}
where $\boldsymbol{\theta }_{\text{PS}}(t)$ represents the global model vector at the server at global iteration $t$ and $\Delta \hat{\boldsymbol{\theta}}_{PS}(t)$ is the noisy estimate of the average of the local updates. Note that if there were no noise or fading, the average of the local updates would be 
\begin{equation} \label{gl_update_error_free}
    \Delta \boldsymbol{\theta}_{PS}(t)=\frac{1}{\left| \mathcal{S}(t) \right| }\sum_{m\in \mathcal{S}(t)}\Delta\theta_{m}(t). 
\end{equation}

\subsection{EH Devices}

We study FL with OTA for EH devices, each with a unit-sized battery. At each global iteration, devices harvest energy with varying success, storing it for future use following the harvest-store-use approach \cite{EHsensors}. Surplus energy is lost if the battery is full during energy arrivals. Local SGD and update transmissions consume one energy unit global per iteration, highlighting stochastic energy availability as a key constraint.

We consider a Bernoulli energy arrival process where, at each global iteration $t$, the $m$-th user receives unit energy with probability $p_{e}^{m}(t)$. Active users consume available battery energy, while inactive users store it for future use. Energy arrivals are shared with the PS after each iteration, as in \cite{an2024online, chen2023joint}.

\section{OTA FL with EH Devices with Unit Battery}

In this section, we consider FL with OTA aggregation, where only active users contribute to the iterations due to limited energy arrivals and provide a brief convergence analysis.

The model updates of the scheduled users, $\Delta \boldsymbol{\theta}_{m}^{cx}(t) \in \mathbb{C}^{N}$, $m\in \mathcal{S}(t)$, are transmitted as complex signals at iteration $t$, represented as 
\begin{subequations}
\begin{align} 
    \Delta \boldsymbol{\theta}_{m}^{r e}(t) &\triangleq \left[\Delta \theta_{m}^{1}(t), \Delta \theta_{m}^{2}(t), \ldots, \Delta \theta_{m}^{N}(t)\right]^{T} \label{complex_update_re}, \\
    \Delta \boldsymbol{\theta}_{m}^{i m}(t) &\triangleq \left[\Delta \theta_{m}^{N+1}(t), \Delta \theta_{m}^{N+2}(t) , \ldots, \Delta \theta_{m}^{2N}(t)\right]^{T} \label{complex_update_im}, \\
    \Delta \boldsymbol{\theta}_{m}^{cx}(t) &\triangleq \Delta \boldsymbol{\theta}_{m}^{r e}(t) + j \Delta \boldsymbol{\theta}_{m}^{i m}(t)
    \label{complex_update}.
\end{align}
\end{subequations}

Using the channel output at each antenna as defined in (\ref{rec_signal_0}), the PS combines signals from $K$ antennas using the sum of the channel gains from the scheduled users to each antenna as
\begin{equation} \label{combined_signal}
    \boldsymbol{y}_{PS}(t)= \frac{1}{K}\sum_{k=1}^{K}\left(\sum_{m\in \mathcal{S}(t)} \boldsymbol{h}_{m,k}(t)\right)^{*}\circ \boldsymbol{y}_{PS,k}(t).
\end{equation}
where the received signals $\boldsymbol{y}_{PS,k}(t)$'s are given in (\ref{rec_signal_0}) for the transmitted signal $\Delta\boldsymbol{\theta}_{m}^{cx}(t)$.

The $n$-th symbol of (\ref{combined_signal}) can be partition into three signals
\begin{align} 
    y_{PS}^{n}(t)=&\underset{y_{PS}^{n,sig}(t)(\text{signal term})}{\underbrace{\sum_{m\in\mathcal{S}(t)}\left(\frac{1}{K}\sum_{k=1}^{K}\vert h_{m,k}^{n}(t)\vert ^{2}\right)\Delta\theta_{m}^{n,cx}(t)}} \notag\\ &+\underset{y_{PS}^{n,int}(t)(\text{interference term})}{\underbrace{\frac{1}{K}\sum_{m\in\mathcal{S}(t)}\underset{m^{\prime}\neq m}{\sum_{m^{\prime}\in\mathcal{S}(t)}}\sum_{k=1}^{K}(h_{m,k}^{n}(t))^{*}h_{m^{\prime},k}^{n}(t)\Delta\theta_{m^{\prime}}^{n,cx}(t)}} \notag \\ &+\underset{y_{PS}^{n,noise}(t)(\text{noise term})}{\underbrace{\frac{1}{K}\sum_{m\in\mathcal{S}(t)}\sum_{k=1}^{K}(h_{m,k}^{n}(t))^{*}z^{n}_{PS,k}(t)}} \label{rec_signal}.
\end{align}

As shown in \cite{amiri2020BFL}, the variance of the interference coefficient $\Delta \theta_{m^{\prime}}^{n, cx}(t)$ decreases with the number of antennas $K$. Hence, a sufficient number of antennas allows for accurate estimation and recovery of noisy aggregated updates as
\begin{subequations} \label{global_update}
\begin{align}
    \Delta\hat{\boldsymbol{\theta}}_{PS}^{n}(t)&=\frac{1}{ \left| \mathcal{S}(t) \right| \sigma_{h}^{2}}{\text{Re}}\{y_{PS}^{n}(t)\}, \\
    \Delta\hat{\boldsymbol{\theta}}_{PS}^{n+N}(t)&=\frac{1}{ \left| \mathcal{S}(t) \right| \sigma_{h}^{2}}{\text{Im}}\{y_{PS}^{n}(t)\},
\end{align}
\end{subequations}
to update the global model, as in (\ref{gl_update_part}).

\subsection{Convergence Analysis}
 
In this section, we provide convergence analysis for the OTA FL with EH devices and no CSIT by upper-bounding the gap between our model estimate and the optimal model.

The optimal solution minimizing (\ref{gl_loss}) is $\boldsymbol{\theta}^{*} \triangleq \arg \min_{\boldsymbol{\theta}} F(\boldsymbol{\theta})$, with optimal loss $F^{*} = F\left(\boldsymbol{\theta}^{*}\right)$. For user $m \in [M]$, the optimal local model is $\boldsymbol{\theta}_{m}^{*} \triangleq \arg \min_{\boldsymbol{\theta}_{m}} F_{m}(\boldsymbol{\theta}_{m})$, with corresponding loss $F_{m}^{*} = F\left(\boldsymbol{\theta}_{m}^{*}\right)$.

\subsubsection{Preliminaries}

The amount of bias and heterogeneity across devices is represented by the following non-negative parameter 
\begin{equation} \label{Bias}
    \Gamma = F^{*} - \sum_{m=1}^{M} \frac{B_{m}}{B} F^{*}_{m},
\end{equation}
A higher $\Gamma$ indicates significant non-i.i.d. data distribution, while $\Gamma \to 0$ reflects near i.i.d..

We consider the same learning rate across users and local iterations, \(\eta_m^i(t) = \eta(t)\), but allow it to vary between global iterations. The local model update at the $m$-th user for global iteration $t$ and local iteration $i \in [\tau]$ is given as 
\begin{equation} \label{updatess}
    \boldsymbol {\theta }_{m}^{i+1} (t)\,\,- \boldsymbol {\theta }_{m}^{1} (t)\,\,= - \eta (t)~\sum \limits _{l=1}^{i} \nabla F_{m} \left ({\boldsymbol {\theta }_{m}^{l} (t), \xi _{m}^{l} (t)~}\right).
\end{equation}

To perform the convergence analysis, similar to the existing studies \cite{amiri2020BFL, busra2023, ozan2024}, we assume that the loss functions \(F_1, \ldots, F_M\) are all \(L\)-smooth and \(\mu\)-strongly convex. Also, it is assumed that the expected squared \(\ell_2\)-norm of the stochastic gradients is bounded; that is, for all \(i \in [\tau]\), \(m \in [M]\), and \(t\), we have $\mathbb{E}_{\xi} \left[ \left\| \nabla F_m \left( \boldsymbol{\theta}_m^i(t), \xi_m^i(t) \right) \right\|_2^2 \right] \leq G^2$.

\subsubsection{Convergence Rate}

For the convergence analysis of OTA FL with EH devices and data heterogeneity, we begin by analyzing the convergence by comparing the optimal model with our system model, where only a subset of users participate in each iteration. Using these findings, we guide user scheduling decisions to minimize the discrepancy in this bound. Our main result is as follows.

\begin{theorem}  \label{thm1} For \( 0 < \eta(t) \leq \min \left\{ 1, \frac{1}{\mu \tau} \right\}, \forall t \). We have
\begin{align}\label{conv1}&\hspace {-.5pc}\mathbb {E} \left [{{ \left \|{{ \boldsymbol {\theta } (t)- {\boldsymbol {\theta }}^{*} }}\right \|_{2}^{2} }}\right ] \leq \left({{\prod \limits _{i=0}^{t-1} A(i) }}\right) \left \|{{ {\boldsymbol {\theta }} (0) - {\boldsymbol {\theta }}^{*} }}\right \|_{2}^{2} \notag\\& \qquad\qquad\qquad\qquad\qquad {+\, \sum \limits _{j=0}^{t-1} B(j) \prod \limits _{i=j+1}^{t-1} A(i),}\end{align}
with
\begin{align} \label{conv2} A(i)\triangleq&1 - \mu \eta (i)~\left ({\tau - \eta (i) (\tau - 1) }\right),\\ B(i)\triangleq& \frac { \eta ^{2}(i) \tau ^{2} G^{2}}{K} + \frac {\sigma _{z}^{2}N}{\alpha _{i}^{2} K {\left| \mathcal{S}(i) \right|} \sigma _{h}^{2}} \notag\\&+ \left ({1+ \mu (1- \eta (i)) }\right) \eta ^{2}(i) G^{2} \frac {\tau (\tau -1)(2\tau -1)}{6} \notag\\&+ \eta ^{2}(i) (\tau ^{2} + \tau -1)  G^{2} + 2 \eta (i) (\tau - 1) \Gamma \notag\\
& +\left( \eta^2(t) \tau(\tau-1)LG + \eta(t)\tau \epsilon  \right)^{2} \notag\\
&+ \left( \eta^2(t) \tau(\tau-1)LG + \eta(t)\tau \epsilon \right) c,\label{conv3} \end{align}
for some constant $c \geq 0$.

\begin{proof} 
See Appendix A.
\end{proof}
\end{theorem}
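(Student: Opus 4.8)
The bound (\ref{conv1})--(\ref{conv3}) is the closed form of an unrolled one-step linear recursion, so the plan is to first establish
\[
\mathbb{E}\!\left[\|\boldsymbol{\theta}(t+1)-\boldsymbol{\theta}^{*}\|_{2}^{2}\right]\le A(t)\,\mathbb{E}\!\left[\|\boldsymbol{\theta}(t)-\boldsymbol{\theta}^{*}\|_{2}^{2}\right]+B(t),
\]
and then iterate it down to $t=0$. The unrolling is a routine induction: one application of the recursion peels off the factor $A(t-1)$ and adds the residual $B(t-1)$, so after $t$ applications one is left with $\big(\prod_{i=0}^{t-1}A(i)\big)\|\boldsymbol{\theta}(0)-\boldsymbol{\theta}^{*}\|_{2}^{2}$ plus $\sum_{j=0}^{t-1}B(j)\prod_{i=j+1}^{t-1}A(i)$, which is exactly (\ref{conv1}).

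To get the one-step inequality I would start from the global update (\ref{gl_update_part}), write $\boldsymbol{\theta}(t+1)-\boldsymbol{\theta}^{*}=\big(\boldsymbol{\theta}(t)-\boldsymbol{\theta}^{*}\big)+\Delta\hat{\boldsymbol{\theta}}_{PS}(t)$, and split $\Delta\hat{\boldsymbol{\theta}}_{PS}(t)$ into four parts: (i) the idealized full-gradient descent step on $F$ evaluated at $\boldsymbol{\theta}(t)$; (ii) the client-drift error from evaluating the stochastic gradients at the local iterates $\boldsymbol{\theta}_{m}^{l}(t)$ rather than at $\boldsymbol{\theta}(t)$, together with the mismatch between the scheduled subset $\mathcal{S}(t)$ and the full population and the least-squares update-estimation error; (iii) the over-the-air interference term $y_{PS}^{n,int}(t)$ from (\ref{rec_signal}); and (iv) the channel-noise term $y_{PS}^{n,noise}(t)$. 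Expanding the square and taking expectation over the mini-batches and channel realizations conditioned on the history up to iteration $t$, the cross terms involving the zero-mean AWGN and the unbiased stochastic-gradient noise vanish, while the inner product of $\boldsymbol{\theta}(t)-\boldsymbol{\theta}^{*}$ with the idealized descent direction is lower-bounded using $\mu$-strong convexity (with $L$-smoothness to relate $F(\boldsymbol{\theta})-F^{*}$ to the gradient norm); this is what yields the contraction coefficient $A(t)=1-\mu\eta(t)\big(\tau-\eta(t)(\tau-1)\big)$, which lies in $(0,1)$ under the stated step-size condition.

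The four error sources then populate $B(t)$. The variance of the noise term follows directly from the combining and normalization in (\ref{global_update}) using independence of the AWGN across the $K$ antennas, giving the $\sigma_{z}^{2}N/\big(\alpha_{t}^{2}K|\mathcal{S}(t)|\sigma_{h}^{2}\big)$ term; the interference term is bounded via the $O(1/K)$ variance result of \cite{amiri2020BFL} together with $\mathbb{E}\|\nabla F_{m}\|_{2}^{2}\le G^{2}$, producing the $\eta^{2}(t)\tau^{2}G^{2}/K$ term; the client drift is controlled by $L$-smoothness and $\mathbb{E}\|\nabla F_{m}\|_{2}^{2}\le G^{2}$ after bounding the per-step displacements $\mathbb{E}\|\boldsymbol{\theta}_{m}^{l}(t)-\boldsymbol{\theta}(t)\|_{2}^{2}\le\eta^{2}(t)(l-1)^{2}G^{2}$ and summing over $l$, which produces the terms proportional to $\eta^{2}(t)G^{2}\tau(\tau-1)(2\tau-1)/6$ and $\eta^{2}(t)G^{2}(\tau^{2}+\tau-1)$; the subset/estimation error, whose norm is bounded by a quantity of the form $\eta^{2}(t)\tau(\tau-1)LG+\eta(t)\tau\epsilon$, enters squared and through a cross term with constant $c$; and the heterogeneity enters as $2\eta(t)(\tau-1)\Gamma$ once the subset-averaged cumulative gradients are related to a descent direction for the global objective through $\Gamma=F^{*}-\sum_{m}(B_{m}/B)F_{m}^{*}$.

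The main obstacle is the bookkeeping in parts (ii)--(iv): one must relate the average over $\mathcal{S}(t)$ of the per-client cumulative stochastic gradients back to a genuine descent direction for $F$, which requires simultaneously controlling the client drift accumulated over all $\tau$ local steps and absorbing the scheduled-subset-versus-population mismatch into the $\epsilon$- and $\Gamma$-dependent terms; $\Gamma$ is precisely the residual that cannot be removed, and the scheduling policies of Section IV are designed to shrink the $\epsilon$-related part of $B(t)$. By contrast, the over-the-air-specific contributions (the noise and interference variances) are mechanical once the $1/K$ scaling from \cite{amiri2020BFL} is invoked.
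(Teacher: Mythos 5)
Your proposal is correct and follows essentially the same route as the paper: the paper's Appendix A performs exactly this decomposition, formalized via two auxiliary sequences $\boldsymbol{w}(t+1)$ (scheduled-subset average) and $\boldsymbol{v}(t+1)$ (full-population average), with Lemmas 1--4 supplying the transmission-error/vanishing-cross-term bound, the strongly-convex contraction $A(t)$ with the federated-averaging and $\Gamma$ terms, the squared subset-mismatch term, and the cross term with constant $c$, respectively, before unrolling the resulting linear recursion. Your four-way split of $\Delta\hat{\boldsymbol{\theta}}_{PS}(t)$ maps one-to-one onto these lemmas, including the key observation that the subset-mismatch cross term does not vanish and must be absorbed into the $c$-dependent term.
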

We note that A$(i)$ represents the decay rate of the distance from the initial starting point to the optimal solution. In $B(i)$, the first two terms represent the transmission error due to the wireless fading MAC with blind transmitters, and the third and fourth terms are related to federated averaging. Additionally, we emphasize that the last two terms represent the error caused by partial user participation similar to \cite{balakrishnan2022diverse}, with $\epsilon$ in (\ref{conv3}) being the gradient approximation error and defined as follows 
\begin{align} \label{epsilon}
\epsilon \triangleq \left\| 
\frac{1}{M} \sum_{m=1}^{M} \nabla F_m(\theta_m(t) )
-
\frac{1}{\left| S(t) \right|} \sum_{m \in \mathcal{S}(t)}  \nabla F_m(\theta_m(t))   
\right\|_2.
\end{align}
Via user scheduling, our aim is to select the subset of users with the average local updates closest to the full participation case, thereby minimizing the error $\epsilon$ in (\ref{epsilon}). Hence, in the next section, we propose user selection and scheduling approaches that minimize the distance between the trained FL model and the optimal model, even in scenarios involving EH devices and highly non-i.i.d. data distributions.

\section{Update Estimation and User Scheduling}
In this section, we demonstrate that the data distribution of users is critical in the scheduling procedure, especially for highly non-i.i.d data distributions, and can be used to minimize the error bound related to the partial participation characteristics of EH devices. First, we propose an entropy-based user scheduling policy with known data distribution. Then, we extend our discussion to the unknown data distribution case and show that the data distribution characteristics can be estimated via least-squares to be used in user scheduling.

\subsection{Entropy-based User Scheduling with Known Data Distributions}

Assuming all MUs reveal their data distribution to the PS beforehand, our method selects a subset of users that effectively represents all data labels in the network. Based on this, the PS characterizes the label distribution of each user $m \in [M]$ as $L_{m} = \left[ l_{m,0}, l_{m,1}, \dots, l_{m,{N_c}-1} \right]$, where $N_c$ is the total number of classes, and $l_{m,{n_c}}$ represents the portion of the $m$-th user's data corresponding to label $n_c$. At each iteration, the PS computes the label distribution for all available user subsets as a probability mass function and selects the one with the highest Shannon entropy, indicating the most balanced label distribution available. While this strategy is similar to that in \cite{lutz2024entropybased}, we extend our approach to a more practical setup that incorporates OTA transmission, wireless fading MAC, and blind transmitters, showing the effectiveness of entropy-based user selection for EH devices under practical constraints.

\subsection{User Clustering and Scheduling with Unknown Data Distribution}

We consider a more realistic case, where the PS lacks data distribution information, ensuring a level of user privacy.

For user scheduling, we rely on the relationship between the model updates from users and their underlying data distribution, similar to studies in \cite{wang2020rl, fraboni2021clusteredsampling, balakrishnan2022diverse}. Unlike these studies, our approach is constrained to using a noisy estimate of the sum of updates provided by all selected users at a given iteration. 
We demonstrate that a representation of the user updates can be estimated at the PS, allowing users to be grouped into clusters based on the similarities between their representations to minimize error from partial participation in (\ref{epsilon}). This approach selects suitable users while preventing redundant information transfer and conserving energy under the constraints of EH devices.

To achieve this, we use LSE to create a representation of the updates. 
Over $T$ estimation iterations, the PS stores normalized global updates from (\ref{global_update}) while all the active users participate without scheduling, termed the \textit{estimation phase}. Note that we normalize user updates to unit norm to mitigate scale discrepancies. At the end of this estimation window, PS estimates representative updates based on stored global updates and participation information. We emphasize that the goal is to estimate a representation of user updates rather than recovering the individual updates themselves.

We define a matrix $\hat{\boldsymbol{\Theta}}_{PS}$, whose rows represent global model updates $\Delta\hat{\boldsymbol{\theta}}_{PS}(t)$ from (\ref{global_update}). For the $j$-th iteration with $j\le T$,  the $j$-th row of this matrix can be expressed as $\hat{\boldsymbol{\Theta}}_{PS, j} = \boldsymbol{A}_{j} \boldsymbol{\Theta}_{j} + \boldsymbol{N}_{j}^{'}$, where $\boldsymbol{A}_{j}$ is a binary participation vector with  $\boldsymbol{A}_j \in \{0,1\}^{1 \times M}$, and $\boldsymbol{\Theta}_{j} \in \mathbb{R}^{M \times 2N}$, with each row representing the local model update for a specific user $m \in [M]$, denoted as $\Delta \boldsymbol{\theta}_{j,m}$.
Additionally, $\boldsymbol{N}_{j}^{'} \in \mathbb{R}^{1 \times 2N}$, whose $d$-th element is denoted by ${N}_{j,d}^{'}$ for $d \in [2N]$, represents the effective noise arising from MAC fading, AWGN, and PS combining errors. $\hat{\boldsymbol{\Theta}}_{PS, j}$ is expressed as:
\begin{align}
\hat{\boldsymbol{\Theta}}_{PS, j} = \boldsymbol{A}_{j} \begin{bmatrix}
\Delta \boldsymbol{\theta}_{j,1} \\
\vdots \\
\Delta \boldsymbol{\theta}_{j,M}
\end{bmatrix}_{} + 
\begin{bmatrix}
{N}_{j,1}^{'} 
\quad \dots \quad 
{N}_{j,2N}^{'}
\end{bmatrix}_{}.
\end{align}

We also define $\boldsymbol{\Theta}_{rep} \in \mathbb{R}^{M \times 2N}$ as a representation of local updates. Using this, $\hat{\boldsymbol{\Theta}}_{PS, j}$ can be written as:
\begin{align} \label{est1}
 \hat{\boldsymbol{\Theta}}_{PS, j} = \boldsymbol{A}_{j} (\boldsymbol{\Theta}_{rep} + \boldsymbol{\Theta}_{\textit{diff},j}) + \boldsymbol{N}_{j}^{'},  
\end{align}
where $\boldsymbol{\Theta}_{\textit{diff},j}$ is defined as the difference between $\boldsymbol{\Theta}_{rep} - \boldsymbol{\Theta}_{j} $.
Combining (\ref{est1}) for $j \in \{1, \dots, T\}$ and defining a total noise term $\boldsymbol{N}^{*}_{j} \triangleq \boldsymbol{A}_{j} \boldsymbol{\Theta}_{\textit{diff},j} + \boldsymbol{N}^{'}_{j}$, which represents the noise due to the channel, interference from the blind transmitters, and the difference between representative updates and the real updates, we obtain
\begin{equation} \label{rep_grad}
\hat{\boldsymbol{\Theta}}_{PS} = \boldsymbol{A} \boldsymbol{\Theta}_{rep} + \boldsymbol{N}^{*},
\end{equation}
where $\hat{\boldsymbol{\Theta}}_{PS} = [\Delta \hat{\boldsymbol{\theta}}_{PS}(t-T+1); \cdots ;\Delta \hat{\boldsymbol{\theta}}_{PS}(t)] \in \mathbb{R}^{T \times 2N}$ and `;' represents row-wise concatenation. Additionally, we have $\boldsymbol{A} \in \{0,1\}^{T \times M}$, $\boldsymbol{\Theta}_{rep} \in \mathbb{R}^{M \times 2N} $ and $\boldsymbol{N}^{*} \in \mathbb{R}^{T \times 2N}$.

By solving LSE for (\ref{rep_grad}), we can get an estimate for the representative updates as $\hat{\boldsymbol{\Theta}}_{rep}$. Using this representation, the PS can infer the characteristics of the users' data distribution as a similarity between user representations, which can then be utilized in the user selection procedure.

Due to the limited and stochastic energy arrivals, some users might dominate the training and create a bias towards certain labels and certain users.
By employing \textit{cosine similarity}, users are clustered into groups to promote diverse user contributions by selecting the expected number of users based on energy distribution from each cluster to ensure unbiased training. This approach helps to reduce the bias due to the non-i.i.d data and provides fair performance among clients, as noted similarly in \cite{wang2020rl, balakrishnan2022diverse}.

\section{Numerical Results}

In this section, we evaluate the performance of our proposed user scheduling methods across multiple scenarios. We consider image classification tasks on the MNIST \cite{deng2012mnist}, FMNIST \cite{xiao2017fashionmnistnovelimagedataset}, and CIFAR-10 \cite{cifar10} datasets under non-i.i.d. data distributions. 
For MNIST and FMNIST, we use a single-layer neural network with 784 input and 10 output neurons ($2N = 7850$). For CIFAR-10, we use a convolutional neural network (CNN) ($2N = 797962$) as in \cite{acar2021federated}. Training is performed by SGD with a learning rate of 0.05 and a scheduler, $\tau = 5$ and mini-batch size  $|\xi_m(t)| = 100$ for MNIST and FMNIST, and $\tau = 3$ and $|\xi_m(t)| = 128$ for CIFAR-10.

\begin{figure}
    \centering
    \begin{subfigure}{0.49\columnwidth}
        \centering
        \includegraphics[trim={0.4cm 0.0cm 0.37cm 0.3cm},clip,width=\textwidth]{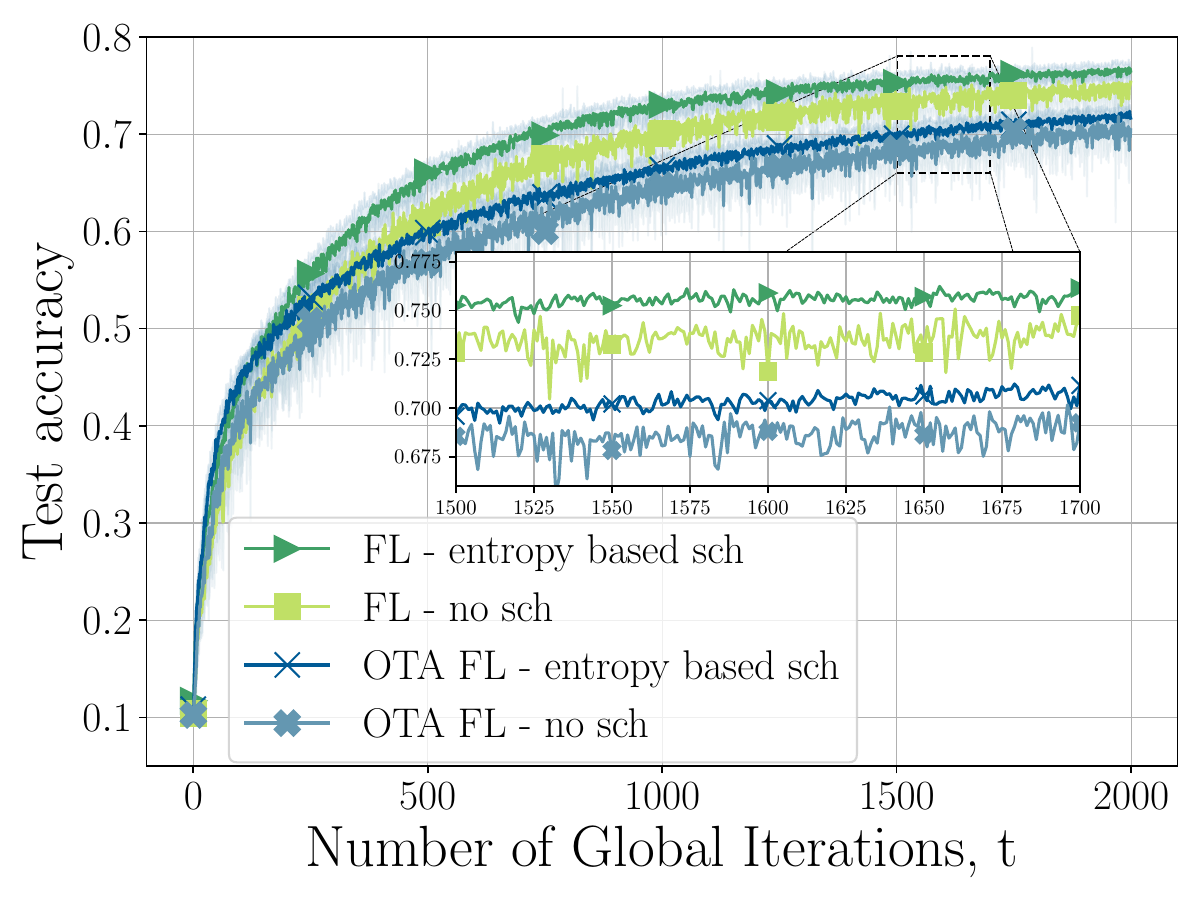}
        \vspace{-0.6cm}
        \captionsetup{font=scriptsize}
        \caption{$\beta = 0.1$.}
        \label{fig:ci100_dir01}
    \end{subfigure}
    \begin{subfigure}{0.49\columnwidth}
        \centering
        \includegraphics[trim={0.4cm 0.0cm 0.37cm 0.3cm},clip,width=\textwidth]{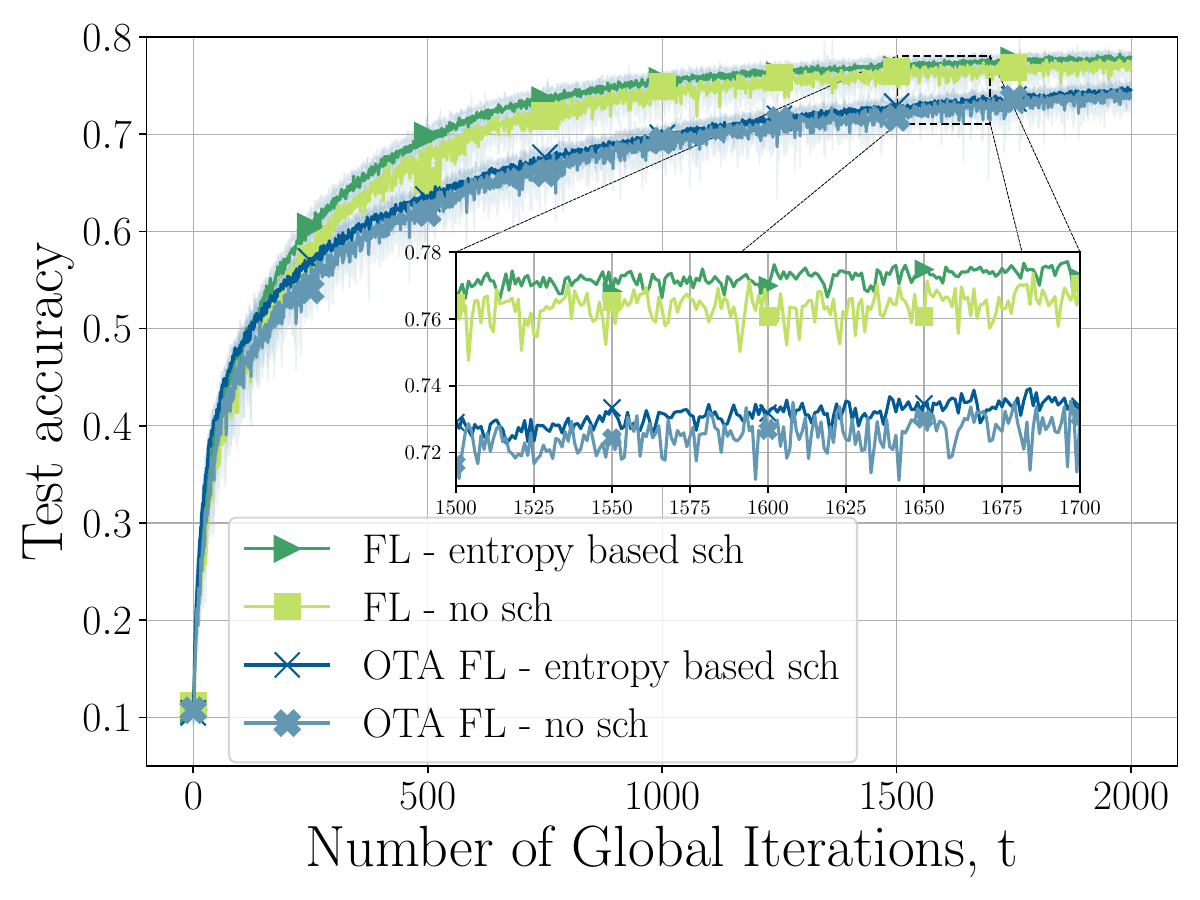}
        \vspace{-0.6cm}
        \captionsetup{font=scriptsize}
        \caption{$\beta = 0.2$.}
        \label{fig:ci100_dir02}
    \end{subfigure}
    \vspace{-0.5cm}
    \captionsetup{font=footnotesize}
    \caption{The mean test accuracy of entropy-based scheduling for CIFAR-10 with $M = 100$,
    $\left| \mathcal{B}_{m} \right| = 500$ and $p_e^{m}(t) = 0.1$, $\forall m, t$.}
    \label{fig:fig_ci100_dir0102}
    \vspace{-0.7cm}
\end{figure}

To simulate highly non-i.i.d. data, we consider two different distribution scenarios. In the first scenario, users receive data limited to a fixed number of labels, either 1 or 2 classes assigned per user. In the second one, we sample $ \boldsymbol{p_m} \sim \text{Dir}_{N_c}(\beta)$ with $ \boldsymbol{p_m} = [p_{m,0} , \cdots, p_{m,N_c-1}]$, and user  $m$  receives  $p_{m,{n_c}}$ portion of its data from class  $n_c \in [N_c]$. $\beta$ is a Dirichlet distribution parameter, where smaller values of $\beta$ lead to more unbalanced partitions.

We evaluate the performance of our wireless OTA FL with highly non-i.i.d data against a baseline scenario without user scheduling, where users participate in the global learning process whenever they have energy.
Throughout the simulations, users are connected to a PS through wireless fading MAC where channel gains from each user to each PS antenna are i.i.d., with parameters $K = 200$, $\sigma_{h}^{2} = 1$, and $\sigma_{z}^{2} = 0.1$.

In Fig. \ref{fig:fig_ci100_dir0102}, we demonstrate the performance of entropy-based scheduling for CIFAR-10 dataset with $\beta \in \{0.1,0.2\}$  for $M = 100$ users with $\left| \mathcal{B}_{m} \right| = 500$ and $p_e^{m}(t) = 0.1$ for $m \in [M]$. We observe that the gains from our scheme are more significant in scenarios with greater heterogeneity for both the error-free and OTA FL cases. As shown in the figure, as the distribution becomes more heterogeneous as in Fig \ref{fig:ci100_dir01}, (i.e., $\beta=0.1$), the impact of entropy-based scheduling increases. 

Fig. \ref{fig:fig_mn40_niid12} shows the mean test accuracies for data distributions with 1 class and 2 classes per user on the MNIST dataset with $M = 40$, $\left| \mathcal{B}_{m} \right| = 1250$, $p_e^{m}(t) = 0.25$ for $m \in [M]$ and estimation phases of $T=100$ and $T=200$ iterations. In both cases, entropy-based scheduling results in higher and more stable accuracy levels. For the unknown data distribution, the PS estimates local user representations after $T$ iterations and groups users into 10 clusters,
scheduling one user per cluster for each iteration, similar to \cite{wang2020rl}. Similar to the previous case, our scheme achieves higher gains in scenarios with greater heterogeneity. One reason for this is that estimation becomes more difficult when users are more similar, i.e., with less heterogeneous distribution, clustering based on the estimations becomes harder as well. However, scheduling still offers a performance gain compared to no scheduling baseline, even when data distributions are less heterogeneous.

In Fig. \ref{fig:fig_mn20_fas40}, we illustrate the performance of our scheduling policies with 10 clusters for the MNIST dataset with $M = 20$ users, where $\left| \mathcal{B}_{m} \right| = 2500$, $p_e^{m}(t) = 0.5$ and FMNIST dataset with $M = 40$ users, where $\left| \mathcal{B}_{m} \right| = 1250$, $p_e^{m}(t) = 0.25$ for $m \in [M]$. For known data distributions, the entropy-based scheduling method outperforms the no-scheduling baseline approach.  
For unknown data distributions, the global model's performance improves after the estimation phase, approaching the entropy-based case.

The results above demonstrate that our entropy-based scheduling approach, leveraging users' data distributions, gives a performance boost for OTA FL with EH devices over a wireless MAC channel with blind transmitters, yielding higher and more stable accuracy levels. Moreover, the results demonstrated that in cases of unknown data distributions, user representations can be estimated on the PS side to schedule diverse users, preserve privacy, eliminate redundant update transfers, and improve learning performance. In both entropy-based and LSE-based methods, we select diverse users to achieve a uniform data distribution, closely approximating full participation updates and minimizing error in (\ref{epsilon}).

\begin{figure}
    \centering
    \vspace{0.026cm}
    \begin{subfigure}{0.49\columnwidth}
        \centering
        \includegraphics[trim={0.4cm 0.2cm 0.37cm 0.3cm},clip,width=\textwidth]{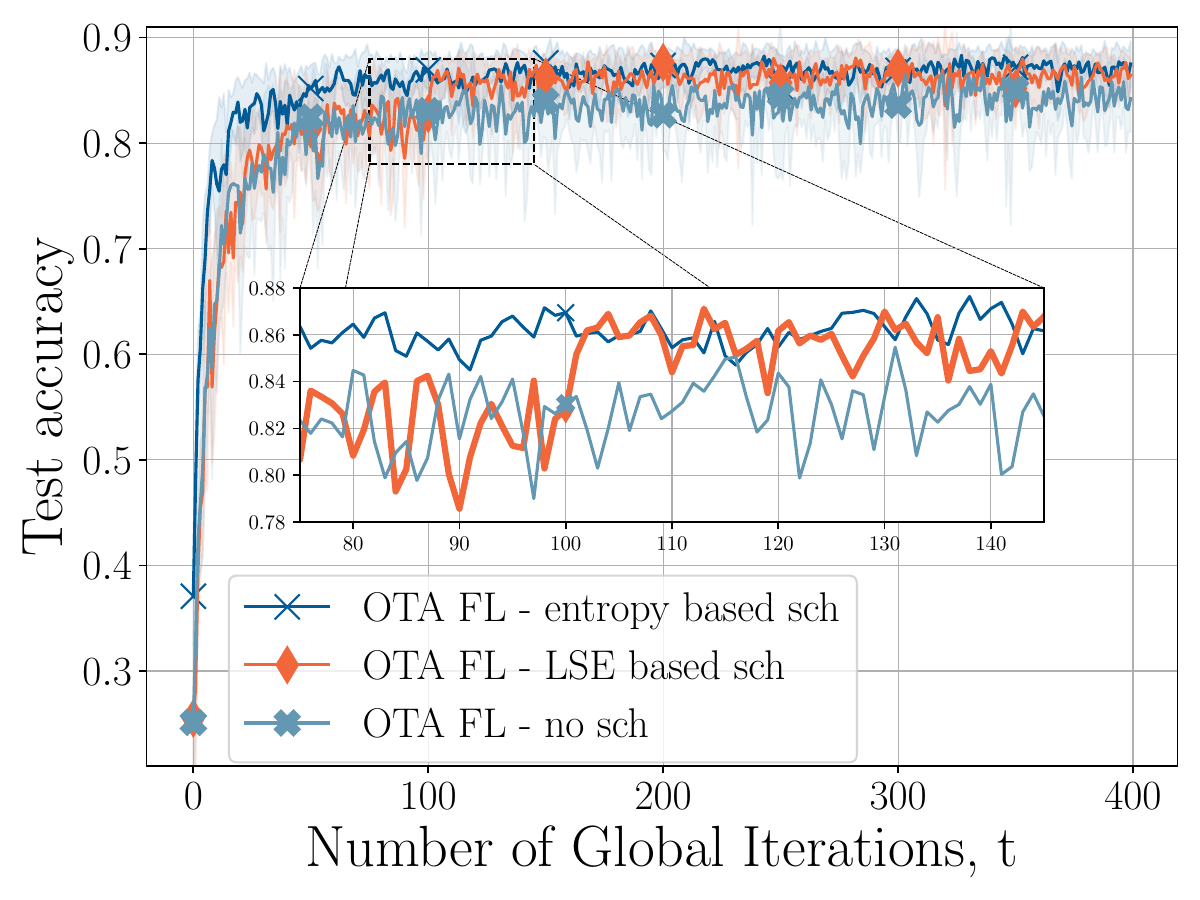}
        \vspace{-0.6cm}
        \captionsetup{font=scriptsize}
        \caption{1 class per user and $T=100$.}
        \label{fig:mn40_niid1}
    \end{subfigure}
    \begin{subfigure}{0.49\columnwidth}
        \centering
        \includegraphics[trim={0.4cm 0.2cm 0.37cm 0.3cm},clip,width=\textwidth]{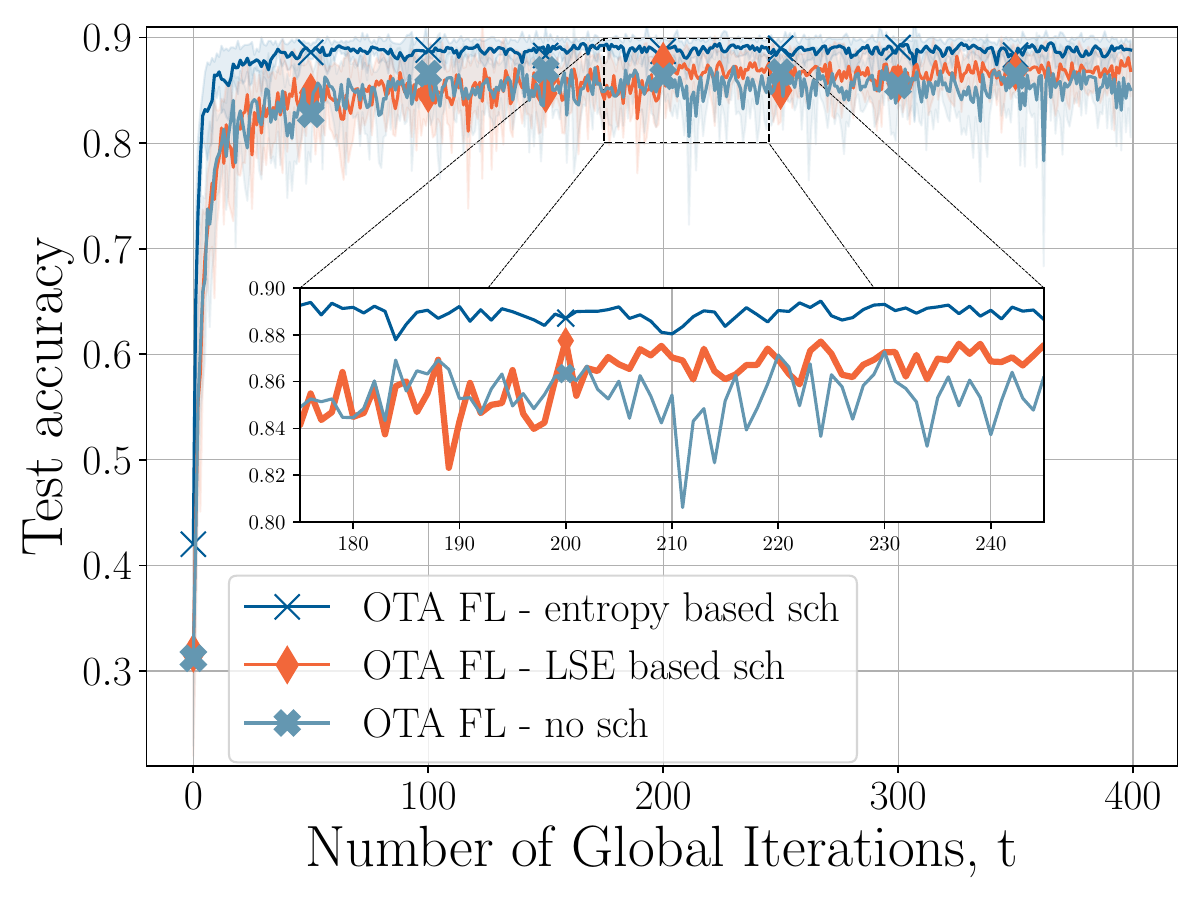}
        \vspace{-0.6cm}
        \captionsetup{font=scriptsize}
        \caption{2 class per user and $T=200$.}
        \label{fig:mn40_niid2}
    \end{subfigure}
    \vspace{-0.5cm}
    \captionsetup{font=footnotesize}
    \caption{The mean test accuracy for MNIST with $M = 40$, $\left| \mathcal{B}_{m} \right| = 1250$ and $p_e^{m}(t) = 0.25$, $\forall m, t$.}
    \label{fig:fig_mn40_niid12}
    \vspace{-0.35cm}
\end{figure}
\begin{figure}
    \centering
    \begin{subfigure}{0.49\columnwidth}
        \centering
        \includegraphics[trim={0.4cm 0.2cm 0.37cm 0.3cm},clip,width=\textwidth]{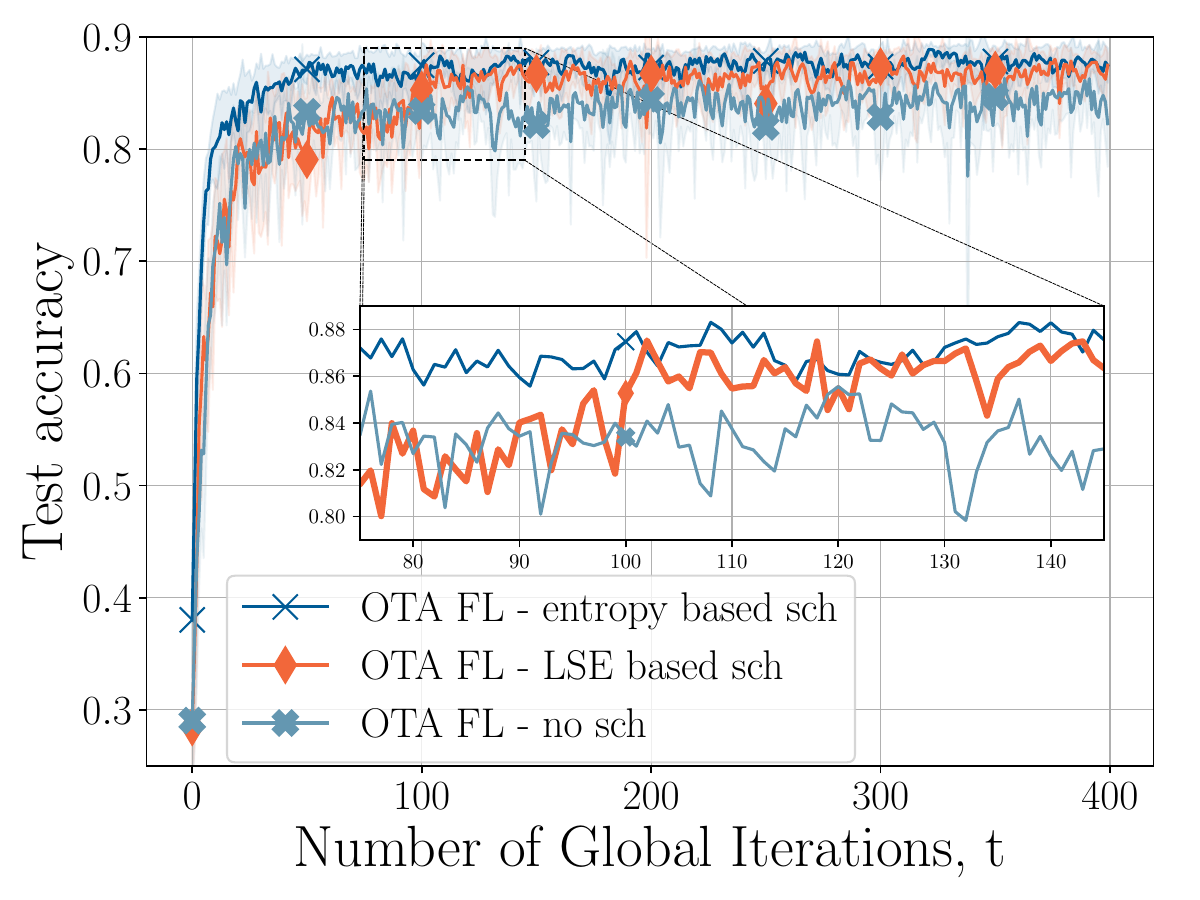}
        \vspace{-0.6cm}
        \captionsetup{font=scriptsize}
        \caption{MNIST, $M = 20$, 1 class per user and $T=100$.}
        \label{fig:mn20_niid1}
    \end{subfigure}
    \begin{subfigure}{0.49\columnwidth}
        \centering
        \includegraphics[trim={0.4cm 0.2cm 0.37cm 0.3cm},clip,width=\textwidth]{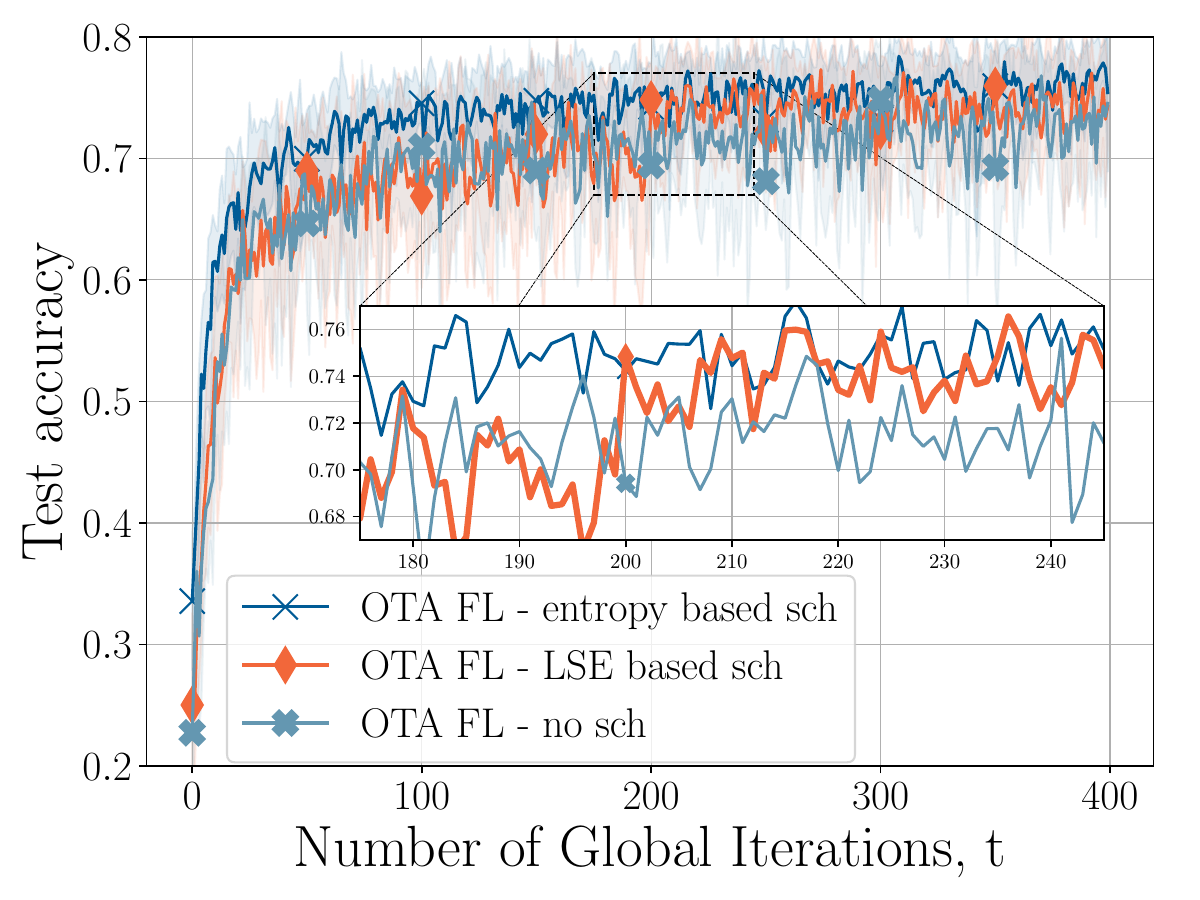}
        \vspace{-0.6cm}
        \captionsetup{font=scriptsize}
        \caption{FMNIST, $M = 40$, 1 class per user and $T=200$.}
        \label{fig:fas40_niid1}
    \end{subfigure}
    \vspace{-0.5cm}
    \captionsetup{font=footnotesize}
    \caption{The mean test accuracy for MNIST and FMNIST.}
    \label{fig:fig_mn20_fas40}
    \vspace{-0.7cm}
\end{figure}

\section{Conclusions}

In this work, we study user scheduling and user representation estimation for OTA FL with EH devices over wireless fading MAC with highly heterogeneous data distribution. For both known and unknown data distribution scenarios, our framework proposes user scheduling based on the relations of users. We analyze the convergence rate for the OTA FL with EH devices and demonstrate the effect of user scheduling. While our first proposed entropy-based approach uses label data distribution information to reduce the error in the convergence bound, our second least squares estimation approach enables the PS to infer user relationships for scheduling without compromising privacy. The numerical results from various setups demonstrate that performing user scheduling and update estimation significantly enhances performance, especially in highly heterogeneous distributions. A potential future direction is to implement clustered federated learning for the user clusters derived from our estimation.

\appendices
\section*{Appendix A} 
We define:
\begin{align} 
    \boldsymbol{w}(t+1) &\triangleq \boldsymbol{\theta}_{PS}(t) + \frac{1}{\left| \mathcal{S}(t) \right| }\sum_{m\in \mathcal{S}(t)}\Delta \boldsymbol{\theta}_{m}(t), \label{aux_w} \\
    \boldsymbol{v}(t+1) &\triangleq \boldsymbol{\theta}_{PS}(t) + \frac{1}{M} \sum_{m=1}^{M} \Delta \boldsymbol{\theta}_{m}(t). \label{aux_v}
\end{align}
From (\ref{gl_update_part}), we have $    \boldsymbol{\theta}_{PS}(t+1)=\boldsymbol{\theta}_{PS}(t)+ \Delta \hat{\boldsymbol{\theta}}_{PS}(t)$. Using this, we can derive 
\begin{align} \label{app1}
&\hspace {-2pc}\left \| \boldsymbol{\theta}_{PS}(t+1) - \boldsymbol{\theta}^* \right \|_{2}^{2} \notag\\ 
=& \left \| \boldsymbol{\theta}_{PS}(t+1) - \boldsymbol{w}(t+1) + \boldsymbol{w}(t+1) - \boldsymbol{\theta}^* \right \|_{2}^{2} \notag\\ 
=& \left \| \boldsymbol{\theta}_{PS}(t+1) - \boldsymbol{w}(t+1) \right \|_{2}^{2} + \left \| \boldsymbol{w}(t+1) - \boldsymbol{\theta}^* \right \|_{2}^{2} \notag\\ 
&+\, 2\langle \boldsymbol{\theta}_{PS}(t+1) - \boldsymbol{w}(t+1), \boldsymbol{w}(t+1) - \boldsymbol{\theta}^* \rangle.
\end{align}
To bound these terms, we employ the following lemmas.

\begin{lemma} \label{lemma1}
For the first and third terms in (\ref{app1}), we have
{\small{
    \begin{align*}&\mathbb {E} \left [{ \left \|{ \boldsymbol {\theta }_{PS}(t+1) - {\boldsymbol{w}} (t+1) }\right \|_{2}^{2} }\right] \le \frac { \eta ^{2}(t) \tau ^{2} G^{2}}{K} + \frac {\sigma _{z}^{2}N}{\alpha _{t}^{2} K {\left| \mathcal{S}(t) \right|} \sigma _{h}^{2}} \label{bound_term1} ,
    \end{align*}}}
    and
    \begin{equation*} \mathbb {E} \big [\langle \boldsymbol{\theta}_{PS}(t+1) - \boldsymbol{w}(t+1), \boldsymbol{w}(t+1) - \boldsymbol{\theta}^* \rangle \big] = 0.\end{equation*}
\end{lemma}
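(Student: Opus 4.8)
\textbf{Proof plan for Lemma~\ref{lemma1}.}

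The plan is to analyze the two claims separately, both starting from the explicit expression for $\boldsymbol{\theta}_{PS}(t+1) - \boldsymbol{w}(t+1)$. From (\ref{gl_update_part}), (\ref{global_update}) and (\ref{aux_w}), this difference equals the discrepancy between the noisy OTA estimate $\Delta\hat{\boldsymbol{\theta}}_{PS}(t)$ and the ideal average $\frac{1}{|\mathcal{S}(t)|}\sum_{m\in\mathcal{S}(t)}\Delta\boldsymbol{\theta}_m(t)$. Reading off (\ref{rec_signal}) and the normalization in (\ref{global_update}), this error decomposes into (i) a term coming from the fluctuation of the per-user effective channel coefficient $\frac{1}{K}\sum_k |h_{m,k}^n(t)|^2$ around its mean $\sigma_h^2$, (ii) the interference term $y_{PS}^{n,int}(t)$ from the cross-channel products of the blind transmitters, and (iii) the rescaled noise term $y_{PS}^{n,noise}(t)$. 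The first claim will follow by taking expectations of the squared $\ell_2$-norm of this decomposition; since the channel gains are zero-mean and independent across users and antennas, cross terms vanish in expectation, and I would bound the self-channel-fluctuation and interference contributions using $\mathrm{Var}\!\left(\frac{1}{K}\sum_k |h_{m,k}^n|^2\right) = \sigma_h^4/K$ together with the bounded-gradient assumption $\mathbb{E}_\xi\|\nabla F_m\|_2^2 \le G^2$ and (\ref{updatess}), which gives $\mathbb{E}\|\Delta\boldsymbol{\theta}_m(t)\|_2^2 \le \eta^2(t)\tau^2 G^2$; this yields the $\eta^2(t)\tau^2 G^2/K$ piece. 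The noise term, being circularly symmetric AWGN of variance $\sigma_z^2$ over $N$ complex (equivalently $2N$ real) coordinates, rescaled by $\frac{1}{|\mathcal{S}(t)|\sigma_h^2}$ (with the extra $\alpha_t$ a power-allocation factor that appears in the transmit scaling), contributes the $\sigma_z^2 N/(\alpha_t^2 K |\mathcal{S}(t)|\sigma_h^2)$ piece after using that combining over $K$ antennas reduces the noise variance by $K$.

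For the second claim, the orthogonality $\mathbb{E}[\langle \boldsymbol{\theta}_{PS}(t+1) - \boldsymbol{w}(t+1),\, \boldsymbol{w}(t+1) - \boldsymbol{\theta}^*\rangle] = 0$, I would condition on the global model $\boldsymbol{\theta}_{PS}(t)$, the scheduling set $\mathcal{S}(t)$, and the local SGD randomness (hence on all the local updates $\Delta\boldsymbol{\theta}_m(t)$ and on $\boldsymbol{w}(t+1)$), so that $\boldsymbol{w}(t+1) - \boldsymbol{\theta}^*$ becomes deterministic and can be pulled out of the conditional expectation. It then suffices to show $\mathbb{E}[\,\boldsymbol{\theta}_{PS}(t+1) - \boldsymbol{w}(t+1)\mid \cdot\,] = \mathbf{0}$, i.e. that the OTA estimator is conditionally unbiased. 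This uses that, conditioned on the updates, the only remaining randomness is in the channel gains $h_{m,k}^n(t)$ and the AWGN $z_{PS,k}^n(t)$: the noise term has zero mean; the interference term has zero mean because $\mathbb{E}[(h_{m,k}^n)^* h_{m',k}^n] = 0$ for $m'\neq m$ by independence and zero-mean-ness; and the signal term has conditional mean $\sum_m \mathbb{E}\!\left[\frac{1}{K}\sum_k |h_{m,k}^n|^2\right]\Delta\theta_m^{n,cx}(t) = \sigma_h^2 \sum_{m\in\mathcal{S}(t)}\Delta\theta_m^{n,cx}(t)$, which after the $\frac{1}{|\mathcal{S}(t)|\sigma_h^2}$ normalization is exactly $\frac{1}{|\mathcal{S}(t)|}\sum_{m\in\mathcal{S}(t)}\Delta\boldsymbol{\theta}_m(t)$, matching $\boldsymbol{w}(t+1) - \boldsymbol{\theta}_{PS}(t)$. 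Taking the outer expectation then gives zero.

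I expect the main obstacle to be a clean and honest treatment of the interference term: strictly speaking the reconstruction in (\ref{global_update}) only approximately inverts the channel, and the paper itself (citing \cite{amiri2020BFL}) argues that the interference variance is $O(1/K)$ rather than exactly zero. So for the \emph{first} bound I need to carry the interference contribution carefully and show it is absorbed into the stated $\eta^2(t)\tau^2 G^2/K$ term (or a term of that order), which requires bounding $\frac{1}{K^2}\mathbb{E}\big|\sum_k (h_{m,k}^n)^* h_{m',k}^n\big|^2 = \sigma_h^4/K$ per ordered pair and summing over the $O(|\mathcal{S}(t)|^2)$ pairs — I would need to track whether the normalization and the implicit assumption $|\mathcal{S}(t)|$ bounded (or the $G^2$ bound) make this genuinely $O(1/K)$, and similarly ensure the exactness of the second (orthogonality) claim is not spoiled by any approximation, which is why conditioning on the updates and invoking only the first-moment (zero-mean) properties of the channel and noise — all of which hold exactly — is the safe route there.
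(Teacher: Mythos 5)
Your plan is correct and follows essentially the same route as the paper, which itself gives no details and simply defers to Lemmas 1 and 3 of \cite{amiri2020BFL}: the decomposition of $\boldsymbol{\theta}_{PS}(t+1)-\boldsymbol{w}(t+1)$ into channel-fluctuation, interference, and noise contributions with the $O(1/K)$ variance bounds, the use of $\mathbb{E}\|\Delta\boldsymbol{\theta}_m(t)\|_2^2\le \eta^2(t)\tau^2G^2$ from the bounded-gradient assumption, and the conditional-unbiasedness argument for the cross term are exactly the ingredients of those cited lemmas. Your caveat about carefully absorbing the interference contribution into the $\eta^2(t)\tau^2G^2/K$ term is well taken, but it is the same bookkeeping already carried out in the cited reference, not a new gap.
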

\begin{proof}
    The proofs are similar to Lemmas 1 and 3 in \cite{amiri2020BFL}.
\end{proof}
For the second term in (\ref{app1}), we proceed as follows:
\begin{align} \label{app_rec}
&\hspace {-2pc}\left \| \boldsymbol{w}(t+1) - \boldsymbol{\theta}^* \right \|_{2}^{2} \notag\\ 
=& \left \| \boldsymbol{w}(t+1) - \boldsymbol{v}(t+1) + \boldsymbol{v}(t+1) - \boldsymbol{\theta}^* \right \|_{2}^{2} \notag\\ 
=& \left \| \boldsymbol{w}(t+1) - \boldsymbol{v}(t+1) \right \|_{2}^{2} + \left \| \boldsymbol{v}(t+1) - \boldsymbol{\theta}^* \right \|_{2}^{2} \notag\\ 
&+\, 2\langle \boldsymbol{w}(t+1) - \boldsymbol{v}(t+1), \boldsymbol{v}(t+1) - \boldsymbol{\theta}^* \rangle.
\end{align}
\begin{lemma} \label{lemma2}
    For the second term in (\ref{app_rec}), we have
    \begin{align}
    & \mathbb {E} \left [{ \left \|{ \boldsymbol{v} (t+1) - {\boldsymbol {\theta }}^{*} }\right \|_{2}^{2} }\right] \notag\\
 & \qquad  \le\left ({1 - \mu \eta (t)~\left ({\tau - \eta (t) (\tau - 1) }\right) }\right) \mathbb {E} \left [{ \left \|{ \boldsymbol {\theta }_{PS}(t)\,\,- {\boldsymbol {\theta }}^{*} }\right \|_{2}^{2} }\right] \notag\\
    & \qquad \qquad + \left ({1+ \mu (1- \eta (t)) }\right) \eta ^{2}(t) G^{2} \frac {\tau (\tau -1)(2\tau -1)}{6} \notag\\
    & \qquad \qquad+\, \eta ^{2}(t) (\tau ^{2} + \tau -1) G^{2} + 2 \eta (t) (\tau - 1) \Gamma  \label{app2}.
    \end{align}
\end{lemma}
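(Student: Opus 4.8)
The plan is to expand $\boldsymbol{v}(t+1)-\boldsymbol{\theta}^*$ using the definition \eqref{aux_v} together with the local SGD unrolling \eqref{updatess}, and then peel the bound off in the usual three-step fashion: (i) split into a ``noiseless aggregate gradient step'' part plus a ``drift'' part, (ii) handle the gradient-step part with $L$-smoothness and $\mu$-strong convexity to extract the contraction factor $A(t)=1-\mu\eta(t)(\tau-\eta(t)(\tau-1))$, and (iii) bound the drift and cross terms using the bounded-second-moment assumption $\mathbb{E}_\xi\|\nabla F_m\|_2^2\le G^2$ and the heterogeneity constant $\Gamma$ from \eqref{Bias}. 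First I would write, from \eqref{aux_v} and \eqref{updatess},
\begin{equation*}
\boldsymbol{v}(t+1)-\boldsymbol{\theta}^* = \boldsymbol{\theta}_{PS}(t)-\boldsymbol{\theta}^* - \eta(t)\,\frac{1}{M}\sum_{m=1}^M\sum_{l=1}^{\tau-1}\nabla F_m(\boldsymbol{\theta}_m^l(t),\xi_m^l(t)),
\end{equation*}
and then add and subtract $\eta(t)\tau\cdot\frac{1}{M}\sum_m\nabla F_m(\boldsymbol{\theta}_{PS}(t))$ so that the ``clean'' term looks like a full-batch gradient step of the global objective $F$ at $\boldsymbol{\theta}_{PS}(t)$ with effective step $\eta(t)\tau$, plus a correction capturing the difference between gradients at the drifted iterates $\boldsymbol{\theta}_m^l(t)$ and at $\boldsymbol{\theta}_{PS}(t)$.

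Next I would take $\mathbb{E}\|\cdot\|_2^2$ and split via $\|a+b\|^2=\|a\|^2+2\langle a,b\rangle+\|b\|^2$. For the clean quadratic term I would invoke the standard strongly-convex/$L$-smooth one-step descent inequality: with step $\eta(t)\tau\le 1/\mu$, $\mathbb{E}\|\boldsymbol{\theta}_{PS}(t)-\boldsymbol{\theta}^*-\eta(t)\tau\nabla F(\boldsymbol{\theta}_{PS}(t))\|_2^2 \le (1-\mu\eta(t)\tau)\|\boldsymbol{\theta}_{PS}(t)-\boldsymbol{\theta}^*\|_2^2 + (\text{terms in }F(\boldsymbol{\theta}_{PS}(t))-F^*)$, and then reorganize the $-\mu\eta(t)\tau$ into $-\mu\eta(t)(\tau-\eta(t)(\tau-1))$ by absorbing the $\eta^2(t)(\tau-1)$ slack into the additive error terms. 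The heterogeneity term $2\eta(t)(\tau-1)\Gamma$ enters precisely here, through the $F(\boldsymbol{\theta}_{PS}(t))-F^*$ contribution combined with $\Gamma = F^* - \sum_m \frac{B_m}{B}F_m^*$ and the nonnegativity of $F_m(\cdot)-F_m^*$. For the drift/correction terms I would use $L$-smoothness to write $\|\nabla F_m(\boldsymbol{\theta}_m^l(t))-\nabla F_m(\boldsymbol{\theta}_{PS}(t))\|\le L\|\boldsymbol{\theta}_m^l(t)-\boldsymbol{\theta}_{PS}(t)\|$, bound $\|\boldsymbol{\theta}_m^l(t)-\boldsymbol{\theta}_{PS}(t)\|\le \eta(t)\sum_{r=1}^{l-1}\|\nabla F_m(\boldsymbol{\theta}_m^r(t),\xi_m^r(t))\|\le \eta(t)(l-1)G$ in expectation, and sum over $l$; the $\sum_{l=1}^{\tau-1}(l-1)^2$-type sum is what produces $\frac{\tau(\tau-1)(2\tau-1)}{6}$, while the cross terms between the contraction piece and the drift piece, bounded via Cauchy--Schwarz / Young's inequality with the $(1+\mu(1-\eta(t)))$ and $(\tau^2+\tau-1)$ weightings, give the remaining $\eta^2(t)(\tau^2+\tau-1)G^2$ term.

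The main obstacle I expect is the bookkeeping in step (ii)--(iii): getting exactly the stated coefficient $\tau-\eta(t)(\tau-1)$ inside $A(t)$ rather than a cruder $\tau$, and matching the precise polynomial weights $(1+\mu(1-\eta(t)))$, $\frac{\tau(\tau-1)(2\tau-1)}{6}$, and $(\tau^2+\tau-1)$. This requires being careful about (a) which cross terms are killed in expectation because the stochastic gradient noise $\xi_m^l(t)$ is conditionally mean-zero and independent across $(m,l)$, (b) the choice of Young's-inequality parameter when splitting the inner product between $\boldsymbol{\theta}_{PS}(t)-\boldsymbol{\theta}^*$ and the drift term (the factor $\mu(1-\eta(t))$ strongly suggests the splitting weight is tied to $\mu\eta(t)$ so that the leftover exactly reconstitutes the contraction), and (c) using $\eta(t)\le 1$ to keep the slack terms from blowing up. I would not grind through the index manipulations here; the conceptual content is entirely standard local-SGD-with-strong-convexity analysis, and the nonstandard part is purely matching the paper's chosen constants, which I would verify at the end by specializing to $\tau=1$ (where the drift terms must vanish and $A(t)$ must reduce to $1-\mu\eta(t)$).
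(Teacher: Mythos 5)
Your overall plan is the right one: the paper gives no proof of its own here (it defers entirely to Lemma~2 of the cited reference), and that reference's argument is exactly the standard local-SGD analysis under $\mu$-strong convexity, $L$-smoothness, bounded stochastic gradients, and the heterogeneity gap $\Gamma$ that you describe. However, one step in your route, as written, would fail: you propose to extract the contraction factor by treating the update as a full-gradient step $-\eta(t)\tau\nabla F(\boldsymbol{\theta}_{PS}(t))$, obtaining $1-\mu\eta(t)\tau$, and then ``reorganize'' this into $1-\mu\eta(t)\left(\tau-\eta(t)(\tau-1)\right)$ by absorbing the $\mu\eta^2(t)(\tau-1)$ slack into the additive error terms. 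That slack is $+\mu\eta^2(t)(\tau-1)\left\|\boldsymbol{\theta}_{PS}(t)-\boldsymbol{\theta}^*\right\|_2^2$, i.e.\ it is proportional to the very quantity being contracted and cannot be moved into constant additive terms. The coefficient $\tau-\eta(t)(\tau-1)$ does not arise from post hoc reorganization; it comes from applying strong convexity at each local iterate $\boldsymbol{\theta}_m^i(t)$ separately and then lower-bounding, for $i\geq 2$,
\begin{equation*}
\left\|\boldsymbol{\theta}_m^i(t)-\boldsymbol{\theta}^*\right\|_2^2 \;\geq\; (1-\eta(t))\left\|\boldsymbol{\theta}_{PS}(t)-\boldsymbol{\theta}^*\right\|_2^2-\left(\tfrac{1}{\eta(t)}-1\right)\left\|\boldsymbol{\theta}_{PS}(t)-\boldsymbol{\theta}_m^i(t)\right\|_2^2,
\end{equation*}
so that summing over $i\in[\tau]$ (the $i=1$ term contributing exactly $1$) yields $1+(\tau-1)(1-\eta(t))=\tau-\eta(t)(\tau-1)$, while the discarded drift terms acquire the weight $\mu\eta(t)(\tfrac{1}{\eta(t)}-1)=\mu(1-\eta(t))$ that, together with the Young-split of the other inner product, produces the $\left(1+\mu(1-\eta(t))\right)\eta^2(t)G^2\frac{\tau(\tau-1)(2\tau-1)}{6}$ term. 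You gesture at this in your ``obstacles'' paragraph, but the decomposition you actually commit to (a single descent step at $\boldsymbol{\theta}_{PS}(t)$ plus a correction) is not the one that delivers these constants.

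Two smaller points. First, your unrolled sum runs to $\tau-1$; while that is literally consistent with the paper's equations (4) and (14), it is inconsistent with the stated constants ($\tau^2G^2$, $\sum_{i=1}^{\tau}(i-1)^2=\frac{\tau(\tau-1)(2\tau-1)}{6}$, $\tau-\eta(t)(\tau-1)$), all of which correspond to $\tau$ accumulated gradient steps; you need the sum over $l\in[\tau]$ to reproduce the lemma as stated. Second, $\boldsymbol{v}(t+1)$ averages uniformly over $m$ while $F=\sum_m\frac{|\mathcal{B}_m|}{B}F_m$ is a weighted sum, so your reference gradient $\nabla F(\boldsymbol{\theta}_{PS}(t))$ only aligns with the update direction when the local datasets are equal-sized; the convexity-at-local-iterates route avoids needing a global reference gradient altogether and handles the heterogeneity through $\Gamma$ directly.
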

\begin{proof} 
The proof follows the same argument in  \cite[Lemma 2]{amiri2020BFL}. 
\end{proof}

\begin{lemma} \label{lemma3}
For the first term in (\ref{app_rec}), we have
\begin{align*}
    &\mathbb{E} \left[ \left\| \boldsymbol{w}(t+1) - \boldsymbol{v}(t+1) \right\|_{2}^{2} \right] \\
    &  \qquad \qquad = \left( \eta^2(t) \tau(\tau-1)LG + \eta(t)\tau \epsilon  \right)^{2}.
\end{align*}
\end{lemma}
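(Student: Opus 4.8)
The plan is to evaluate $\boldsymbol{w}(t+1)-\boldsymbol{v}(t+1)$ directly from (\ref{aux_w})--(\ref{aux_v}) and split it into a term that reproduces the partial-participation error $\epsilon$ of (\ref{epsilon}) and a term controlled by $L$-smoothness. Subtracting (\ref{aux_v}) from (\ref{aux_w}) cancels $\boldsymbol{\theta}_{PS}(t)$, leaving $\boldsymbol{w}(t+1)-\boldsymbol{v}(t+1)=\frac{1}{|\mathcal{S}(t)|}\sum_{m\in\mathcal{S}(t)}\Delta\boldsymbol{\theta}_m(t)-\frac{1}{M}\sum_{m=1}^{M}\Delta\boldsymbol{\theta}_m(t)$. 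Substituting the unrolled local update (\ref{updatess}), $\Delta\boldsymbol{\theta}_m(t)=-\eta(t)\sum_{l=1}^{\tau}\nabla F_m(\boldsymbol{\theta}_m^l(t),\xi_m^l(t))$ (up to the usual $\tau$ versus $\tau-1$ indexing), and adding and subtracting $\nabla F_m(\boldsymbol{\theta}_{PS}(t))$ in each inner sum---using that every user starts at $\boldsymbol{\theta}_m^1(t)=\boldsymbol{\theta}_{PS}(t)$---I would write $\boldsymbol{w}(t+1)-\boldsymbol{v}(t+1)=\boldsymbol{p}+\boldsymbol{q}$, where $\boldsymbol{p}=\eta(t)\tau\big(\frac{1}{M}\sum_{m=1}^{M}\nabla F_m(\boldsymbol{\theta}_{PS}(t))-\frac{1}{|\mathcal{S}(t)|}\sum_{m\in\mathcal{S}(t)}\nabla F_m(\boldsymbol{\theta}_{PS}(t))\big)$ and $\boldsymbol{q}$ is the doubly averaged sum, over users and local steps, of the increments $-\eta(t)\big(\nabla F_m(\boldsymbol{\theta}_m^l(t),\xi_m^l(t))-\nabla F_m(\boldsymbol{\theta}_{PS}(t))\big)$.

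For $\boldsymbol{p}$, the definition (\ref{epsilon}) gives $\|\boldsymbol{p}\|_2=\eta(t)\tau\epsilon$ at once (reading $\theta_m(t)$ there as the common broadcast model). For $\boldsymbol{q}$, I would unroll (\ref{updatess}) to get $\|\boldsymbol{\theta}_m^l(t)-\boldsymbol{\theta}_m^1(t)\|_2\le\eta(t)\sum_{j=1}^{l-1}\|\nabla F_m(\boldsymbol{\theta}_m^j(t),\xi_m^j(t))\|_2$, take expectations, and combine Jensen's inequality with the second-moment bound $\mathbb{E}_\xi[\|\nabla F_m\|_2^2]\le G^2$ to obtain $\mathbb{E}[\|\boldsymbol{\theta}_m^l(t)-\boldsymbol{\theta}_m^1(t)\|_2]\le\eta(t)(l-1)G$; $L$-smoothness then yields $\mathbb{E}[\|\nabla F_m(\boldsymbol{\theta}_m^l(t))-\nabla F_m(\boldsymbol{\theta}_{PS}(t))\|_2]\le L\eta(t)(l-1)G$. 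Summing over $l$ gives $\sum_l(l-1)=\tfrac{\tau(\tau-1)}{2}$, while bounding the difference of the two averages by twice the largest per-user norm gives a factor $2$; these cancel, leaving $\mathbb{E}[\|\boldsymbol{q}\|_2]\le\eta^2(t)\tau(\tau-1)LG$. A triangle inequality and squaring then produce the stated expression.

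The main obstacle is the mean-zero stochastic component $\nabla F_m(\boldsymbol{\theta}_m^l(t),\xi_m^l(t))-\nabla F_m(\boldsymbol{\theta}_m^l(t))$ of each increment: it does not shrink under the triangle inequality and, at face value, would add a $\sqrt{\tau}\,G$-type contribution to $\|\boldsymbol{q}\|_2$ that is not present in the stated form. Recovering the clean bound thus requires treating this term by conditioning so that it is absorbed into the $G^2$-type terms already present in $B(i)$ (the $\eta^2(i)(\tau^2+\tau-1)G^2$ family inherited from Lemma \ref{lemma2}) rather than appearing through Lemma \ref{lemma3}; a secondary bookkeeping point is keeping the $\tau$ versus $\tau-1$ counts consistent with the rest of Appendix A so that the coefficients $\tau$ and $\tau(\tau-1)$ come out exactly.
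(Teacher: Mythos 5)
Your decomposition is exactly the route the paper intends: its ``proof'' is a one-line citation to Lemma~1 of \cite{balakrishnan2022diverse}, whose argument is precisely the split you describe --- an $\eta(t)\tau\epsilon$ term from the gradient-approximation error of the selected subset evaluated at the common starting model, plus a client-drift term controlled by $L$-smoothness and the bounded-gradient assumption, with the factor $2$ from the two averages cancelling the $\tfrac{1}{2}$ from $\sum_{l}(l-1)$. Note first that the ``$=$'' in the statement cannot literally be an equality (no triangle-inequality argument produces one); it should be read as ``$\le$'', consistently with how this term enters $B(i)$ in Theorem~\ref{thm1} as an upper bound, so you should not try to establish equality.

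Two points in your sketch need tightening. First, the quantity to be bounded is $\mathbb{E}[\|\boldsymbol{w}(t+1)-\boldsymbol{v}(t+1)\|_2^2]$, and bounding $\mathbb{E}\|\boldsymbol{q}\|_2$ and then ``squaring'' goes the wrong way under Jensen. You should instead bound $(\mathbb{E}\|\boldsymbol{q}\|_2^2)^{1/2}$ directly, using Minkowski's inequality in $L^2$ over the sums on $m$ and $l$ together with $\mathbb{E}\|\nabla F_m(\boldsymbol{\theta}_m^l(t))-\nabla F_m(\boldsymbol{\theta}_{PS}(t))\|_2^2\le L^2\eta^2(t)(l-1)^2G^2$; the constants come out the same, and then $\mathbb{E}\|\boldsymbol{p}+\boldsymbol{q}\|_2^2\le\bigl((\mathbb{E}\|\boldsymbol{p}\|_2^2)^{1/2}+(\mathbb{E}\|\boldsymbol{q}\|_2^2)^{1/2}\bigr)^2$ by Cauchy--Schwarz yields the stated square. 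Second, the obstacle you flag --- the mean-zero mini-batch noise $\nabla F_m(\boldsymbol{\theta}_m^l(t),\xi_m^l(t))-\nabla F_m(\boldsymbol{\theta}_m^l(t))$ --- is real: with stochastic local gradients, $\mathbb{E}\|\boldsymbol{q}\|_2^2$ picks up an additional variance contribution of order $\eta^2(t)\tau G^2/|\mathcal{S}(t)|$ that the stated bound omits. This is a looseness in the lemma as written rather than something your proof can argue away; the cited lemma in \cite{balakrishnan2022diverse} is derived for deterministic local gradients, and the bound holds in the stated form only if one either replaces the stochastic gradients by full local gradients in $\Delta\boldsymbol{\theta}_m(t)$, or conditions so as to fold the residual variance into the $G^2$-type terms already present in $B(i)$, as you suggest. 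With that caveat made explicit, your reconstruction is the correct one.
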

\begin{proof}
    The proof is similar to \cite[Lemma 1]{balakrishnan2022diverse}.
\end{proof}

\begin{lemma} \label{lemma4}
The third term in (\ref{app_rec}) is bounded by
\begin{align}
    &\mathbb{E} \left[     2\langle \boldsymbol{w}(t+1) - \boldsymbol{v}(t+1), \boldsymbol{v}(t+1) - \boldsymbol{\theta}^* \rangle \right] \notag\\
    &\qquad \qquad \leq \left( \eta^2(t) \tau(\tau-1)LG + \eta(t)\tau \epsilon \right) c,
\end{align}
for some constant $c \geq 0$, which is related to $\Gamma$, $G$  and \(\mu\).
\end{lemma}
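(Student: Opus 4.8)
The plan is to bound the cross term by the Cauchy--Schwarz inequality, using Lemma~\ref{lemma3} for one of the two factors and Lemma~\ref{lemma2} together with a boundedness argument for the other. Since both $\boldsymbol{w}(t+1)-\boldsymbol{v}(t+1)$ and $\boldsymbol{v}(t+1)-\boldsymbol{\theta}^{*}$ are random through the mini-batch samples $\xi_{m}^{l}(t)$, the first step is to apply the Cauchy--Schwarz inequality in expectation,
\begin{equation*}
\mathbb{E}\!\left[ 2\langle \boldsymbol{w}(t+1)-\boldsymbol{v}(t+1),\, \boldsymbol{v}(t+1)-\boldsymbol{\theta}^{*} \rangle \right]
\le 2\sqrt{\mathbb{E}\!\left[ \left\| \boldsymbol{w}(t+1)-\boldsymbol{v}(t+1) \right\|_{2}^{2} \right]}\,\sqrt{\mathbb{E}\!\left[ \left\| \boldsymbol{v}(t+1)-\boldsymbol{\theta}^{*} \right\|_{2}^{2} \right]}.
\end{equation*}

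By Lemma~\ref{lemma3} the first square root equals exactly $\eta^{2}(t)\tau(\tau-1)LG+\eta(t)\tau\epsilon$, which is precisely the leading factor of the claimed bound, so it remains to argue that the second square root is at most a constant $c/2$. For this I would invoke Lemma~\ref{lemma2}, which bounds $\mathbb{E}[\|\boldsymbol{v}(t+1)-\boldsymbol{\theta}^{*}\|_{2}^{2}]$ by $A(t)\,\mathbb{E}[\|\boldsymbol{\theta}_{PS}(t)-\boldsymbol{\theta}^{*}\|_{2}^{2}]$ plus terms that are polynomial in $\eta(t),\tau,G$ and linear in $\Gamma$, with $A(t)=1-\mu\eta(t)(\tau-\eta(t)(\tau-1))\in(0,1)$ under the step-size restriction of Theorem~\ref{thm1}. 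Using the standard assumption, common in this line of work, that the server iterates stay within a bounded region around $\boldsymbol{\theta}^{*}$, the right-hand side is bounded by a finite quantity determined only by $\mu$, $G$, $\Gamma$ and the fixed parameters $\tau,\eta$; taking $c$ to be twice the square root of that quantity and substituting back yields
\begin{equation*}
\mathbb{E}\!\left[ 2\langle \boldsymbol{w}(t+1)-\boldsymbol{v}(t+1),\, \boldsymbol{v}(t+1)-\boldsymbol{\theta}^{*} \rangle \right]
\le \left( \eta^{2}(t)\tau(\tau-1)LG+\eta(t)\tau\epsilon \right) c,
\end{equation*}
as asserted.

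I expect the last step to be the main obstacle: the Cauchy--Schwarz factor $\mathbb{E}[\|\boldsymbol{v}(t+1)-\boldsymbol{\theta}^{*}\|_{2}^{2}]$ is, up to lower-order additive terms, exactly the quantity Theorem~\ref{thm1} bounds recursively, so turning it into a genuine constant $c$ requires an a priori bound on $\|\boldsymbol{\theta}_{PS}(t)-\boldsymbol{\theta}^{*}\|_{2}$ rather than the recursion itself. One could instead carry this term through the recursion, but that would make the coefficients $A(i),B(i)$ in (\ref{conv1}) more involved and would blur the clean ``leading factor times constant'' form of the statement; everything else in the argument is a direct substitution of Lemmas~\ref{lemma2} and \ref{lemma3} into the Cauchy--Schwarz estimate above.
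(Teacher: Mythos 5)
Your approach is the same as the paper's: the proof it points to (Lemma 1 of the cited diverse-client-selection work) is exactly Cauchy--Schwarz on the cross term, with the $\left\|\boldsymbol{w}(t+1)-\boldsymbol{v}(t+1)\right\|_2$ factor supplied by Lemma~\ref{lemma3} and the $\left\|\boldsymbol{v}(t+1)-\boldsymbol{\theta}^*\right\|_2$ factor absorbed into the constant $c$. The obstacle you flag at the end is not actually a gap under the paper's stated assumptions: you do not need to carry $\mathbb{E}\left[\left\|\boldsymbol{v}(t+1)-\boldsymbol{\theta}^*\right\|_2^2\right]$ through the recursion of Theorem~\ref{thm1}, because $\mu$-strong convexity combined with the bounded-gradient assumption already yields an a priori bound: $\mu\left\|\boldsymbol{\theta}-\boldsymbol{\theta}_m^*\right\|_2\le\left\|\nabla F_m(\boldsymbol{\theta})\right\|_2\le G$ at any point where the gradient bound applies, and the local optima are tied to $\boldsymbol{\theta}^*$ through the heterogeneity gap $\Gamma$; adding the drift term $\left\|\frac{1}{M}\sum_m\Delta\boldsymbol{\theta}_m(t)\right\|_2\le\eta(t)\tau G$ gives a deterministic constant depending only on $G$, $\mu$, and $\Gamma$ --- which is precisely why the lemma states that $c$ depends on those three quantities. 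With that observation substituted for your ``bounded iterates'' assumption, your argument is the paper's argument.
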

\begin{proof}
The proof follows a similar line as Lemma 1 in \cite{balakrishnan2022diverse}. 
\end{proof}

\bibliographystyle{IEEEtran}
\bibliography{ref.bib}

\end{document}